\theoremstyle{plain}
\newtheorem{theorem}{Theorem}
\newtheorem{prop}{Theorem}
\newtheorem{proposition}[prop]{Proposition}
\newtheorem{lem}{Theorem}
\newtheorem{lemma}[lem]{Lemma}
\theoremstyle{definition}
\newtheorem{defn}{Theorem}
\newtheorem{definition}[defn]{Definition}
\newtheorem{assump}{Theorem}
\newtheorem{assumption}[assump]{Assumption}
\theoremstyle{remark}
\newtheorem*{theorem*}{Theorem}
\newtheorem*{lemma*}{Lemma}
\newtheorem*{definition*}{Definition}
\newtheorem*{corollary*}{Corollary}
\newtheorem*{remark*}{Remark}
\DeclareMathOperator*{\E}{\mathbb{E}}
\newcommand{\s}{\mathcal{S}}
\newcommand{\A}{\mathcal{A}}
\title{Inferring Transition Dynamics from Value Functions}
\author{
    Jacob Adamczyk\\
}
\begin{document}

\maketitle

\begin{abstract}
  In reinforcement learning, the value function is typically trained to solve the Bellman equation, which connects the current value to future values. This temporal dependency hints that the value function may contain implicit information about the environment's transition dynamics. By rearranging the Bellman equation, we show that a converged value function encodes a model of the underlying dynamics of the environment. We build on this insight to propose a simple method for inferring dynamics models directly from the value function, potentially mitigating the need for explicit model learning. Furthermore, we explore the challenges of next-state identifiability, discussing conditions under which the inferred dynamics model is well-defined. Our work provides a theoretical foundation for leveraging value functions in dynamics modeling and opens a new avenue for bridging model-free and model-based reinforcement learning.
\end{abstract}

\section{Introduction}
The use of reinforcement learning (RL) for solving sequential decision-making tasks has grown substantially in recent years, demonstrating its potential to discover novel solutions in complex, unknown environments. Among RL approaches, model-based methods—which involve learning or leveraging a model of the environment's dynamics—have often shown superior sample efficiency compared to model-free methods, particularly in tasks with limited interaction budgets. This advantage stems from the ability of model-based approaches to plan over imagined trajectories, using tools like Monte Carlo tree search to evaluate and improve policies. However, these methods often come at the cost of higher computational requirements and rely on the accuracy of the learned dynamics model.

Despite the promise of model-based techniques, learning an accurate model of the environment remains a significant challenge, particularly in domains where simulators are entirely unavailable or expensive to develop. Even with a well-crafted simulator, discrepancies between simulated and real-world dynamics (i.e., the ``sim-to-real'' gap) can degrade the performance of policies at test time. These challenges highlight the need for efficient and reliable ways to learn and use dynamics models, especially in situations where the environment cannot be directly explored.

In this work, we propose a method for inferring a dynamics model from pre-trained value functions. Our approach builds on a new perspective of the Bellman equation, which lies at the core of RL algorithms. Traditionally, the Bellman equation is used to relate the value of a state to the expected value of its successor states. This temporal relationship supplies information about the environment's transition dynamics to the value function. By a simple rearrangement of the Bellman equation, we make this dynamical information accessible, allowing one to recover a model of the environment directly from a previously computed value function.

This insight opens new possibilities for model-based RL. By re-purposing pre-trained value functions to infer dynamics models, we can potentially enhance task performance in settings like multi-task RL, where the inferred model can be used to solve new tasks with changing reward functions. Our method not only leverages existing value functions more effectively, but also provides a step toward bridging the model-free and model-based communities in RL.

\subsection{Motivation}
In many reinforcement learning (RL) workflows, the value function is trained using interactions with the environment for a fixed reward function. Once trained, the value function (or the derived policy) is often saved and used solely for evaluation purposes, with limited opportunities for reuse when new tasks arise. However, in most practical scenarios, tasks share a common environment structure but otherwise differ only in their reward specifications. For example, in robotics, the physical dynamics of the system remain constant, while the task objectives (e.g., picking up specific objects; moving to specific target locations) will vary.

This observation motivates the need for better leveraging pre-trained value functions, which will be particularly useful in settings where the reward functions are handcrafted, known, or even learned. If a dynamics model is accessible, the agent can utilize this model and the known reward function to solve new tasks, either deriving exact solutions or providing strong initializations for further training. Such an approach can significantly reduce the burden of learning from scratch for each new task.

In offline RL, a pre-collected dataset is used to train an RL agent without further access to the environment. Analogously, we propose to use a pre-collected solution, in the form of a value function, to infer the environment's underlying transition dynamics. The inferred model could then be used to finetune policies or adapt to related tasks analogous to online finetuning, but without needing  additional environment interactions.

By proposing a method to recover a dynamics model from a value function, we aim to bridge the gap between model-free and model-based RL. This approach not only effectively re-purposes previously computed solutions in a novel way, but also offers a path toward improving sample efficiency and task adaptability in various RL settings.
\section{Background}
In this section we will introduce the relevant background material for reinforcement learning and required definitions.
\subsection{Reinforcement Learning}
We will consider discrete or continuous state spaces and discrete action spaces\footnote{If the policy expectation over continuous action spaces can be calculated exactly, the derived results are equally valid.}. The RL problem is then modeled by a Markov Decision Process (MDP), which we represent by the tuple $\langle \s,\A,p,r,\gamma \rangle$ with state space $\s$; action space $\A$; potentially stochastic transition function (dynamics) ${p: \s \times \A \to \s}$; bounded, real reward function ${r: \s \times \A \to \mathbb{R}}$; and the discount factor ${\gamma \in [0,1)}$.

The principle objective in RL is to maximize the total discounted reward expected under a policy $\pi$. That is, to find $\pi^*$ that maximizes the following sum of expected rewards:
\begin{equation}\label{eq:pi_star_defn}
    \pi^* = \arg\max_{\pi}
    \E_{\tau \sim{}p,\pi}
    \left[ \sum_{t=0}^{\infty} \gamma^{t} r(s_t,a_t) 
         \right].
\end{equation}

In the present work, we consider value-based RL methods, where the solution to the RL problem is equivalently defined by its optimal action-value function ($Q^*(s,a)$). The aforementioned optimal policy $\pi^*(a|s)$ is derived from $Q^*$ through a greedy maximization over actions. The optimal value function can be obtained by iterating the following recursive Bellman equation until convergence:
\begin{equation}
    Q^*(s,a) = r(s,a) + \gamma \mathbb{E}_{s' \sim{} p(\cdot|s,a)} \max_{a'} \left( Q^*(s',a') \right).
    \label{eq:bellman}
\end{equation}
In the tabular setting, the exact Bellman equation can be applied until convergence. In the function approximator setting, the $Q$ table is replaced by a parameterized function approximator, $Q_\theta$ and the temporal difference (TD) loss is minimized instead. To simplify the discussion, we will neglect the details of precisely how the $Q$ function is derived, and we focus on general value functions alongside their approximations. 

\subsection{Preliminaries}
For the theoretical discussion in later sections, we will need several definitions and lemmas, which we present in this section. Hereon, we refer to a dynamics model as ``identifiable'' if such a model can be derived to recover the mapping $(s,a)\to s'$ for all $s\in\s, a \in \A$.

We begin with the assumption of deterministic dynamics:
\begin{assumption}
The transition dynamics are deterministic. That is, there exists some $f$ such that $s'=f(s,a)$ for all $s,s'\in\s, a \in \A$.\label{assump:det}
\end{assumption}
As the first step in this line of work, we focus on the case where only a single successor state is expected. Of course, the stochastic case will be more challenging, as it requires e.g. a \textit{density model} over state space. 
Additionally, we assume some knowledge of the pre-trained task:
\begin{assumption}
    The reward function $r(s,a)$ and discount factor $\gamma$ used during training are known. \label{assump:rwd}
\end{assumption}
We believe this is not entirely limiting as the reward function and discount factor are often chosen by hand. Moreover, the error analysis presented here can be easily extended to the case with errors in the reward and discount factor.

Often in machine learning literature, the notion of smoothness given by Lipschitz continuity (bounded derivative, loosely) is useful for controlling errors. Instead, we find it useful to consider a \textit{reverse} Lipschitz condition:
 \begin{definition}[Reverse Lipschitz continuity]
If there exists some $L>0$ such that the function $V$ satisfies
\begin{equation*}
    |V(s_1) - V(s_2)| > L |s_1 - s_2|\; \forall s_1,s_2 \in \mathcal{S}, s_1 \neq s_2
\end{equation*}
then we say $V$ is reverse Lipschitz with constant $L$.
\end{definition}
We use the suggestive notation for the function ``$V$'' as it will prove to be a useful property of the state value function for later analysis. The norm used $|\cdot|$ in the previous definition can be any well-defined norm over the space $\s$, but we assume for concreteness the $\ell_1$ norm throughout. 

The idea of a \textit{reverse} Lipschitz function has been discussed by others~\cite{pmlr-v202-kinoshita23a} (``inverse Lipschitzness''\footnote{We use the word ``reverse'' here to better distinguish from the inverse functions that will be considered later.}). In the present work, this notion is useful in determining when a value function leads to an identifiable dynamics model. It will also help to control the error when the value function used is subject to error. Rather than the typical intuition that a \textit{small} Lipschitz constant improves sample complexity, we have the reverse: a \textit{larger} reverse Lipschitz constant (i.e. steeper slopes) helps in estimating the dynamics model with higher accuracy.

A sufficient condition for existence of the constant $L$ is given by the reverse of Theorem 1 in \cite{rachelson2010locality}. That is, by reversing all inequalities in their proof, one immediately obtains the following:
\begin{lemma}[\citet{rachelson2010locality}]
Consider an MDP with \textbf{reverse} $(L_r, L_p)$-Lipschitz rewards and dynamics, $L_\pi$-Lipschitz continuous policy. Then, the corresponding value function $Q^\pi(s,a)$ is \textbf{reverse} Lipschitz continuous with constant 
\begin{equation*}
    L_Q = \frac{L_r}{1-\gamma L_p(1+L_\pi)} .
\end{equation*}
\label{lem:q-rev-lipschitz}
\end{lemma}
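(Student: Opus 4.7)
The plan is to mirror the original argument of \citet{rachelson2010locality} step-by-step, reversing each inequality so that the Bellman equation translates a pointwise reverse-Lipschitz structure on rewards and dynamics into a reverse-Lipschitz property for $Q^\pi$.

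First, starting from the Bellman equation $Q^\pi(s,a) = r(s,a) + \gamma \E_{s'\sim p(\cdot|s,a)} \E_{a'\sim\pi(\cdot|s')}[Q^\pi(s',a')]$, I would fix two arbitrary distinct pairs $(s_1,a_1)$, $(s_2,a_2)$ and write the signed difference $Q^\pi(s_1,a_1) - Q^\pi(s_2,a_2)$ as a reward contribution plus $\gamma$ times a future-value contribution. The reverse-Lipschitz hypothesis on $r$ immediately gives $|r(s_1,a_1) - r(s_2,a_2)| \geq L_r\, d((s_1,a_1),(s_2,a_2))$, and the goal reduces to establishing a compatible lower bound on the value contribution.

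Second, to handle the value term I would recurse self-consistently: assume $Q^\pi$ satisfies a reverse-Lipschitz bound with a constant $L_Q$ to be determined. Then $V^\pi(s') = \E_{a'\sim\pi(\cdot|s')}[Q^\pi(s',a')]$ inherits reverse-Lipschitzness with a factor controlled by $L_Q$ and the forward Lipschitz constant $L_\pi$ of the policy, yielding a $(1+L_\pi)L_Q$ factor exactly as in the original proof. Composing this with the reverse-Lipschitz dynamics, which under Assumption~\ref{assump:det} reads $|f(s_1,a)-f(s_2,a)| \geq L_p\, d(s_1,s_2)$, produces a lower bound on $|V^\pi(s'_1)-V^\pi(s'_2)|$ in terms of $d((s_1,a_1),(s_2,a_2))$. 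Substituting back and solving the resulting self-consistent inequality $L_Q \geq L_r + \gamma L_p(1+L_\pi) L_Q$ yields the stated constant $L_Q = L_r / (1 - \gamma L_p(1+L_\pi))$, valid whenever the denominator is positive.

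The main obstacle I anticipate is the directionality of the triangle inequality: the ordinary triangle inequality produces upper bounds on sums, but here we need lower bounds, and the naive reverse triangle inequality $|A+B|\geq |A|-|B|$ subtracts rather than adds the two contributions. The cleanest workaround is to argue at the level of signed differences rather than absolute values, using the fact that a reverse-Lipschitz $r$ and a reverse-Lipschitz dynamics align the sign of the reward jump with the sign of the propagated value jump, so that the two terms on the right-hand side of the Bellman equation reinforce rather than cancel. One then passes to absolute values only at the very end. Extending the argument beyond Assumption~\ref{assump:det} would additionally require interpreting reverse-Lipschitzness of a stochastic kernel in a Wasserstein-type sense, which is outside the scope of the deterministic setting considered here.
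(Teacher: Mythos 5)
Your overall plan---mirror the argument of \citet{rachelson2010locality} and reverse each inequality---is exactly the paper's own justification (the paper offers nothing beyond that one sentence), and you have correctly identified the one step where this recipe is not mechanical: the triangle inequality used to split the Bellman difference into a reward term plus a discounted value term produces upper bounds on sums and does not reverse. The problem is that your proposed repair of that step is invalid. Reverse-Lipschitz hypotheses constrain only the \emph{magnitudes} of the jumps of $r$ and $f$; they say nothing about their signs, so there is no ``fact'' forcing the reward jump and the propagated value jump to reinforce each other. Concretely, take $\s=\R$ with a single action (so $L_\pi=0$), deterministic dynamics $f(s)=-2s$, and reward $r(s)=s$; these are reverse Lipschitz with constants arbitrarily close to $L_p=2$ and $L_r=1$. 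For $\gamma<1/2$ the Bellman equation gives $V^\pi(s)=s/(1+2\gamma)$: the reward difference $s_1-s_2$ and the successor-value difference $-2(s_1-s_2)/(1+2\gamma)$ have \emph{opposite} signs and partially cancel. The resulting slope $1/(1+2\gamma)$ is strictly smaller than the claimed $L_Q=L_r/(1-\gamma L_p(1+L_\pi))=1/(1-2\gamma)$, and indeed smaller than $L_r$ itself, so the lower bound your signed-difference argument is supposed to deliver simply fails. Any correct argument must confront this cancellation, e.g.\ by adding a hypothesis that aligns the signs (monotone rewards and dynamics), or by settling for the weaker constant coming from $|A+B|\geq |A|-|B|$, which does not yield the stated $L_Q$.

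Two smaller points. First, your ``self-consistent'' step (assume $Q^\pi$ is reverse Lipschitz with unknown $L_Q$, substitute, and solve) is circular as written; making it rigorous requires the induction over value-iteration iterates plus a limit argument used in the forward proof, and in the reversed setting the strict inequality in the paper's definition of reverse Lipschitzness need not survive the passage to the limit. Second, the factor $(1+L_\pi)$ itself arises in the forward proof from a triangle-inequality split of $V^\pi(s_1)-V^\pi(s_2)$ into a state-perturbation term and a policy-perturbation term; for a lower bound that split also goes the wrong way (the policy term can oppose the state term, suggesting a factor like $1-L_\pi$ rather than $1+L_\pi$), so this is a second place where ``reverse all inequalities'' is not automatic. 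To be fair, both gaps are equally present in the paper's one-line justification of the lemma; but since your task was to supply the missing argument, the sign-cancellation obstacle is a genuine hole in the proposal, not a technicality.
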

Intuitively, this corresponds to an MDP where the reward function changes rapidly or the dynamics cause trajectories to diverge sufficiently quickly. With such a structure, the value function corresponding to any reverse Lipschitz (but otherwise arbitrary) policy also inherits this property.

Finally, to extend our results to the setting of limited accuracy, we formalize the idea of an $\varepsilon$-accurate value function with the usual definition below.
\begin{definition}[$\varepsilon$-Accurate Value]
An estimate for the action-value function, denoted $\widehat{Q}^\pi(s,a)$ is said to be $\varepsilon$-accurate if for some $\varepsilon>0$, the following is satisfied:
\begin{equation}
    \big|\widehat{Q}^\pi(s,a) - {Q}^\pi(s,a)\big| \leq \varepsilon,
\end{equation}
for all $s \in \s, a \in \A$.
\end{definition}
This error can arise from e.g. function approximation or early stopping, which we lump together for simplicity. In the following sections we discuss how such errors impact the problem of identifiability.

\section{Prior Work}
The simplest dynamics model available in most value-based RL algorithms is just the replay buffer: the dataset of stored transitions experienced during online interaction.
This primitive ``model'' acts as a lookup table where the ``closest'' match to a queried state-action tuple returns the correspondingly stored next-state. Naturally, more sophisticated (and useful) models can be derived. For example, leveraging the generalization power of neural networks one could train a deterministic model by regressing on the function $f$ discussed in Assumption~\ref{assump:det} during data collection. 

In recent years, advanced techniques specific to the RL problem have emerged for learning accurate world models~\cite{hafner2023mastering, schrittwieser2020mastering}. These methods often aim to generate optimal plans from ``imagined rollouts'' in latent space, which guide decision-making when online interaction resumes.

Our method, as outlined in the introduction, reuses previously trained $Q$ functions. Prior work has explored a variety of related strategies for effectively reusing such data. For instance~\cite{Todorov, Haarnoja2018, boolean, nemecek, Adamczyk_UAI} \textit{compose} the old value functions to obtain approximate solutions and bounds for new tasks. 
Separately,~\cite{taylor_survey, agarwal2022reincarnating, Adamczyk_AAAI, uchendu2023jump} have discussed workflows that enable agents to leverage old or suboptimal solutions, mitigating the need for retraining from scratch.

As discussed, our work diverges from these by using an inverse of the Bellman equation itself to derive a transition model directly from $Q$, rather than an online data stream. This innovation bridges ideas from both model-based RL and transfer learning, offering a unique perspective on data and model reuse. 
\section{Results}

We begin by noting the Bellman equation does not only hold for the optimal policy as shown in Equation~\eqref{eq:bellman}, but also holds for arbitrary policies:
\begin{equation}
    Q^\pi(s,a) = r(s,a) + \gamma \mathbb{E}_{s' \sim{} p(\cdot|s,a)} V^\pi(s'),
    \label{eq:bellman-pi}
\end{equation}
where $V^\pi$ is the (state) value function defined by the expectation over the policy: ${V^\pi(s)=\mathbb{E}_{a' \sim{} \pi(\cdot|s')} Q^\pi(s',a')}$. We write $Q^\pi$ and $V^\pi$ throughout to emphasize that a model can be obtained from the value of \textit{any policy} (not just the optimal policy) and under regularization (e.g. MaxEnt RL~\cite{haarnoja_SAC}).

A simple rearrangement of this equation ``solving'' for $s'$ gives, under deterministic dynamics 
\begin{align}
    V^\pi(f(s,a)) &= \frac{Q^\pi(s,a) - r(s,a)}{\gamma}.
    \label{eq:inverted-dynamics}
\end{align}

Indeed, under suitable assumptions, Eq.~\eqref{eq:inverted-dynamics} can be inverted to find the transition function:
\begin{proposition} Under Assumptions~\ref{assump:det} and~\ref{assump:rwd}, if an inverse state-value function exists, the successor state can be identified:
\begin{equation}
    s' = f(s,a) = \big[V^\pi\big]^{-1} \left( \frac{Q^\pi(s,a) - r(s,a)}{\gamma} \right).
    \label{eq:rev-bellman}
\end{equation}\label{prop:state-inverse}
\end{proposition}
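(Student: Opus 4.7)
The plan is to derive Equation~\eqref{eq:rev-bellman} as a direct algebraic consequence of the policy Bellman equation~\eqref{eq:bellman-pi}, with Assumption~\ref{assump:det} used to collapse the expectation and Assumption~\ref{assump:rwd} used to justify treating $r$ and $\gamma$ as known constants that can be moved around freely.

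First I would start from the Bellman equation for an arbitrary policy $\pi$:
\begin{equation*}
Q^\pi(s,a) = r(s,a) + \gamma \, \mathbb{E}_{s' \sim p(\cdot|s,a)} V^\pi(s').
\end{equation*}
Under Assumption~\ref{assump:det}, the transition is a point mass at $f(s,a)$, so the expectation reduces to evaluation: $\mathbb{E}_{s' \sim p(\cdot|s,a)} V^\pi(s') = V^\pi(f(s,a))$. This gives the deterministic form $Q^\pi(s,a) = r(s,a) + \gamma V^\pi(f(s,a))$, which is exactly Eq.~\eqref{eq:inverted-dynamics} after isolating the value term.

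Next, by Assumption~\ref{assump:rwd}, both $r(s,a)$ and $\gamma$ are known, so the right-hand side $\big(Q^\pi(s,a) - r(s,a)\big)/\gamma$ is a fully determined scalar. Since $\gamma \in [0,1)$ excludes the degenerate $\gamma = 0$ case (otherwise the rearrangement would be invalid, and no information about $f$ would be recoverable from $Q$, which is worth flagging as a remark), we may divide cleanly. Finally, by hypothesis the map $V^\pi : \mathcal{S} \to \mathbb{R}$ admits an inverse $[V^\pi]^{-1}$ on its range; applying it to both sides yields
\begin{equation*}
f(s,a) = \big[V^\pi\big]^{-1} \left( \frac{Q^\pi(s,a) - r(s,a)}{\gamma} \right),
\end{equation*}
which is the claimed identity.

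There is not really a hard step here; the content of the proposition is algebraic, and the two assumptions have been tailored precisely so that no analytic obstruction arises. The only subtlety worth flagging is the invertibility hypothesis itself: for $[V^\pi]^{-1}$ to exist as a function, $V^\pi$ must be injective on $\mathcal{S}$, which is a nontrivial structural requirement on the MDP and policy. This is where Lemma~\ref{lem:q-rev-lipschitz} becomes relevant, since a reverse Lipschitz value function is in particular injective, giving a sufficient condition under which the formal inversion in the proof is meaningful. I would therefore close the proof by noting that reverse Lipschitzness supplies a concrete class of MDPs where the hypothesis is verifiable, motivating the later sections on identifiability and error propagation.
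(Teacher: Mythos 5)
Your proof is correct and follows essentially the same route as the paper, which derives Eq.~\eqref{eq:rev-bellman} implicitly by collapsing the expectation in Eq.~\eqref{eq:bellman-pi} under Assumption~\ref{assump:det}, rearranging to obtain Eq.~\eqref{eq:inverted-dynamics}, and then applying the hypothesized inverse $[V^\pi]^{-1}$. Your added remarks on the $\gamma=0$ degeneracy and on Lemma~\ref{lem:q-rev-lipschitz} as a sufficient condition for injectivity are consistent with the paper's surrounding discussion rather than a departure from it.
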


This initial result shows that with the right assumptions, one can use an exact value function to calculate any successor state. However, obtaining an \textit{exact} $Q$-function is impractical, so we instead consider a value function suffering a globally bounded error: that is, we suppose an $\varepsilon$-accurate value function $\widehat{Q}^\pi$ is given. In the following two sections, we consider the case of continuous state spaces (where we use the reverse Lipschitz assumption) and the case of discrete state spaces (where we introduce a new definition necessary for identifiability).
\subsection{Theory for Continuous Spaces}

In the setting of an $\varepsilon$-accurate value, the function $\widehat{V}^\pi(s')$ cannot be inverted exactly, but the next-state can still be identified within an interval, leading to an extension of Proposition~\ref{prop:state-inverse}:

\begin{tcolorbox}[colback=blue!5!white,colframe=blue!75!black]
\begin{theorem}
Given an $\varepsilon$-accurate value function $\widehat{Q}^\pi(s,a)$ with reverse Lipschitz constant $L$, the error in estimating the next-state $s'$ from any $(s,a)$ is upper bounded:
\begin{equation}
    |s' - \widehat{s}'| < \frac{1+\gamma}{\gamma L} \varepsilon.
\end{equation}
\label{thm:sp-error}
\end{theorem}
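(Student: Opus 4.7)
The strategy is to express both the true and estimated successor states as preimages of the rearranged Bellman equation (Equation~\eqref{eq:inverted-dynamics}), and then propagate the $\varepsilon$-accuracy of $\widehat{Q}^\pi$ through to a bound on the state-space discrepancy by way of the reverse Lipschitz property.

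First, I would define the estimated successor state $\widehat{s}'$ as the value-based inversion applied to the approximate function,
\[
    \widehat{V}^\pi(\widehat{s}') \;=\; \frac{\widehat{Q}^\pi(s,a) - r(s,a)}{\gamma},
\]
which is the natural analogue of the exact relation $V^\pi(s') = (Q^\pi(s,a) - r(s,a))/\gamma$ guaranteed by Proposition~\ref{prop:state-inverse}. Subtracting the two identities, and then adding and subtracting $V^\pi(\widehat{s}')$ on the right-hand side, I get
\[
    V^\pi(s') - V^\pi(\widehat{s}') \;=\; \frac{Q^\pi(s,a) - \widehat{Q}^\pi(s,a)}{\gamma} \;+\; \bigl[\widehat{V}^\pi(\widehat{s}') - V^\pi(\widehat{s}')\bigr].
\]

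Next, I would take absolute values and bound the two terms separately. The first term is at most $\varepsilon/\gamma$ by the $\varepsilon$-accuracy of $\widehat{Q}^\pi$. The second term is controlled by first noting that $V^\pi(s) = \mathbb{E}_{a\sim\pi(\cdot|s)}[Q^\pi(s,a)]$, so by linearity of expectation and the triangle inequality, the $\varepsilon$-accuracy of $\widehat{Q}^\pi$ transfers directly to $\widehat{V}^\pi$, giving a bound of $\varepsilon$. Adding the two yields $|V^\pi(s') - V^\pi(\widehat{s}')| \leq (1+\gamma)\varepsilon/\gamma$. Finally, to convert a bound on value differences into a bound on state differences, I would invoke the reverse Lipschitz property $|V^\pi(s') - V^\pi(\widehat{s}')| > L\,|s' - \widehat{s}'|$, which rearranges to the claimed estimate.

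The main subtlety will be making the reverse Lipschitz step rigorous: the theorem states the constant $L$ for $\widehat{Q}^\pi$, but the inversion is carried out on $V^\pi$, and averaging $Q$ over actions can in principle dampen oscillations and spoil reverse Lipschitzness. I would address this by reading the hypothesis as implicitly applying to $V^\pi$ (as the notation with ``$V$'' in the definition of reverse Lipschitz continuity suggests), or by invoking Lemma~\ref{lem:q-rev-lipschitz} together with mild structural assumptions on $\pi$ to inherit the constant. The remainder of the argument is essentially a triangle inequality, so this translation step is the only real obstacle.
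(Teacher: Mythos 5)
Your proposal is correct and follows essentially the same route as the paper's proof: both arguments isolate the same two error sources (the state-value evaluation error of at most $\varepsilon$, transferred from $\widehat{Q}^\pi$ by averaging over $\pi$ with the triangle inequality, and the scanned-value error of at most $\varepsilon/\gamma$), combine them additively, and convert the total value discrepancy into a state-space bound via the reverse Lipschitz constant. The only differences are cosmetic --- you sum the value errors before applying $L$ once, while the paper converts each error to a state error and then adds --- and the ambiguity you flag about whether $L$ attaches to $\widehat{Q}^\pi$ or to the state-value function is present in the paper itself, whose proof resolves it exactly as you propose, by applying the reverse Lipschitz property to $V$.
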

\end{tcolorbox}

The proof and an intuitive visualization of Theorem~\ref{thm:sp-error} is provided in the Appendix. The error accumulates in both the queried value function $V^\pi(s')$ and also the ``scanned'' value, which we denote $\mathcal{V}$ (which depends on $(s,a)$ through the right-hand side of Eq.~\eqref{eq:inverted-dynamics}). With a lower bound on the value function's derivative, this region translates to a confidence interval over state space $\s$.

In practice, where obtaining exact value functions is infeasible, Theorem~\ref{thm:sp-error} demonstrates that even \textit{inexact} value functions can still be effective for deriving reliable models.

\subsection{Theory for Discrete Spaces}
In the case of discrete states, we need an alternative way to ensure the ``function'' (now a table) $V^\pi(s')$ remains invertible. A necessary condition for all states to be identifiable is that the corresponding values be distinct:
\begin{definition}[$\delta$-Separable Value Function]
A state value function $V$ is said to be $\delta$-\textit{separable} if there exists a $\delta>0$ such that
\begin{equation}
    | V(s) - V(x) | > \delta,
\end{equation}
for all $s,x \in \s$ such that $s \neq x$. 
\end{definition}

Finding sufficient structural assumptions for such separability seems to be a challenging problem in itself, which may be of independent interest. Nevertheless, if $\delta$-separability can be assumed (or determined \textit{a posteriori}), then identifiability holds. The following result ensures the next-state prediction problem remains identifiable, even in the case of errors:

\begin{tcolorbox}[colback=blue!5!white,colframe=blue!75!black]
\begin{theorem}[Successor-State Identifiability]
Suppose the true value function ($V^\pi$) is $\delta$-separable and an $\varepsilon$-accurate estimate of the value function ($\widehat{V}^\pi$) is given. If $\varepsilon< 
\delta (2\gamma^{-1}+2)^{-1}$, then the dynamics model is identifiable.\label{thm:tab-sp}
\end{theorem}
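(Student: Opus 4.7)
The plan is to mirror the continuous-state argument of Theorem~\ref{thm:sp-error}, replacing its reverse Lipschitz step by a pigeonhole-style argument that exploits $\delta$-separability. Using Assumption~\ref{assump:rwd} to form the ``target'' value
\[ \mathcal{V}(s,a) \;:=\; \frac{\widehat{Q}^\pi(s,a) - r(s,a)}{\gamma}, \]
I would declare the predicted successor $\widehat{s}'(s,a)$ to be any $x \in \s$ that minimizes $|\widehat{V}^\pi(x) - \mathcal{V}(s,a)|$. The goal is then to show that under the stated condition on $\varepsilon$, this minimizer is unique and equals the true successor $f(s,a)$.

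First I would measure how far $\mathcal{V}(s,a)$ sits from the true $V^\pi(f(s,a))$. The exact Bellman identity~\eqref{eq:inverted-dynamics} together with $\varepsilon$-accuracy of $\widehat{Q}^\pi$ gives $|\mathcal{V}(s,a) - V^\pi(f(s,a))| \leq \varepsilon/\gamma$ after a single $\gamma^{-1}$ scaling. Combining with the $\varepsilon$-accuracy of $\widehat{V}^\pi$ via the triangle inequality yields the upper bound at the true successor,
\[ |\widehat{V}^\pi(f(s,a)) - \mathcal{V}(s,a)| \;\leq\; \varepsilon + \varepsilon/\gamma, \]
and for any competing $x \neq f(s,a)$, $\delta$-separability of $V^\pi$ plus two further triangle inequalities (to absorb the errors in $\widehat{V}^\pi(x)$ and in $\mathcal{V}$) produces the matching lower bound
\[ |\widehat{V}^\pi(x) - \mathcal{V}(s,a)| \;>\; \delta - \varepsilon - \varepsilon/\gamma. \]

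For the nearest-neighbor rule to pick $f(s,a)$ uniquely, I then require the true-successor upper bound to be strictly less than the competitor lower bound, i.e.\ $\varepsilon + \varepsilon/\gamma < \delta - \varepsilon - \varepsilon/\gamma$; rearranging gives exactly $\varepsilon < \delta(2\gamma^{-1}+2)^{-1}$, matching the theorem's hypothesis. The argument reduces to four triangle inequalities, so the main ``obstacle'' is not analytical but interpretive: one must commit up front to the nearest-neighbor decision rule in $\widehat{V}^\pi$-space, and one must observe that $\widehat{V}^\pi$ inherits $\varepsilon$-accuracy from $\widehat{Q}^\pi$ (via $V^\pi(s) = \E_{a\sim\pi(\cdot|s)} Q^\pi(s,a)$, so the expectation of an $\varepsilon$-bounded quantity remains $\varepsilon$-bounded). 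Once those choices are fixed, the error-propagation bounds and the algebra producing the constant $(2\gamma^{-1}+2)^{-1}$ are routine.
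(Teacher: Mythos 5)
Your proof is correct and takes essentially the same route as the paper's: both propagate the $\varepsilon$-error through the rearranged Bellman equation (giving the $\gamma^{-1}\varepsilon$ scanned-value error), combine it with the $\varepsilon$-errors in the value table entries and the $\delta$-gap, and arrive at the same condition $2\varepsilon + 2\gamma^{-1}\varepsilon < \delta$. The only difference is bookkeeping: you compare the distance at the true successor ($\leq \varepsilon + \gamma^{-1}\varepsilon$) against the distance at any competitor ($> \delta - \varepsilon - \gamma^{-1}\varepsilon$) under an explicit nearest-neighbor decision rule, whereas the paper phrases the identical error budget as a shrunken separability gap ($\delta - 2\varepsilon$) that must exceed the width ($2\gamma^{-1}\varepsilon$) of the scanned-value uncertainty interval.
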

\end{tcolorbox}
Theorem~\ref{thm:tab-sp} is the analogue of Theorem~\ref{thm:sp-error} in the discrete case. Again, this result highlights that even inaccurate value functions can provide robust dynamics models which in this case return the exact successor state.
The proof of this result can be found in the Appendix.
Interestingly, in both continuous and discrete spaces, our analysis formally suggests that larger discount factors improve the model accuracy by (a) reducing the uncertainty in Theorem~\ref{thm:sp-error} and (b) increasing the minimum tolerance for identifiability in Theorem~\ref{thm:tab-sp}.

\section{Experiments}
As a proof of concept, we first consider a simple experiment in the tabular setting to verify our theoretical results. Here, the value function can be solved exactly with sufficiently many iterations of the Bellman optimality operator. In the following, we thus use the policy $\pi=\pi^*$, though the framework is agnostic to such a choice. Since the state space is discrete, we treat the $Q$ function with a $\max$ over action dimension as a lookup table, comparing its entries to the value of $\mathcal{V}$, calculated from the right-hand side of Eq.~\eqref{eq:rev-bellman}. We choose the index whose corresponding value is closest to $\mathcal{V}$, and the state index is considered the successor state prediction, $s'$. Indeed, when the value function is calculated to high precision, the accuracy of our method remains consistent: for all possible state-action pairs, the corresponding successor state is predicted successfully (corresponding to 100\% on the vertical axis of Fig.~\ref{fig:gap-vs-acc}).

To further test the accuracy of our model, in connection to the theoretical results derived, we compute the accuracy (again over all state-action pairs) for increasingly lower precision (that is, larger values of $\varepsilon$, in the definition of $\varepsilon$-accurate value function). To validate the idea of $\delta$-separability, we also prepare 20 reward-varying MDPs (with the same dynamics) each having a similar value gap, $\delta$ (within 1\% of the stated value). Due to space constraints, we give a full description of the experiment in the Appendix. We find these experiments support the result of Theorem~\ref{thm:tab-sp}.

\begin{figure}[H]
    \centering
    \includegraphics[width=0.4\textwidth]{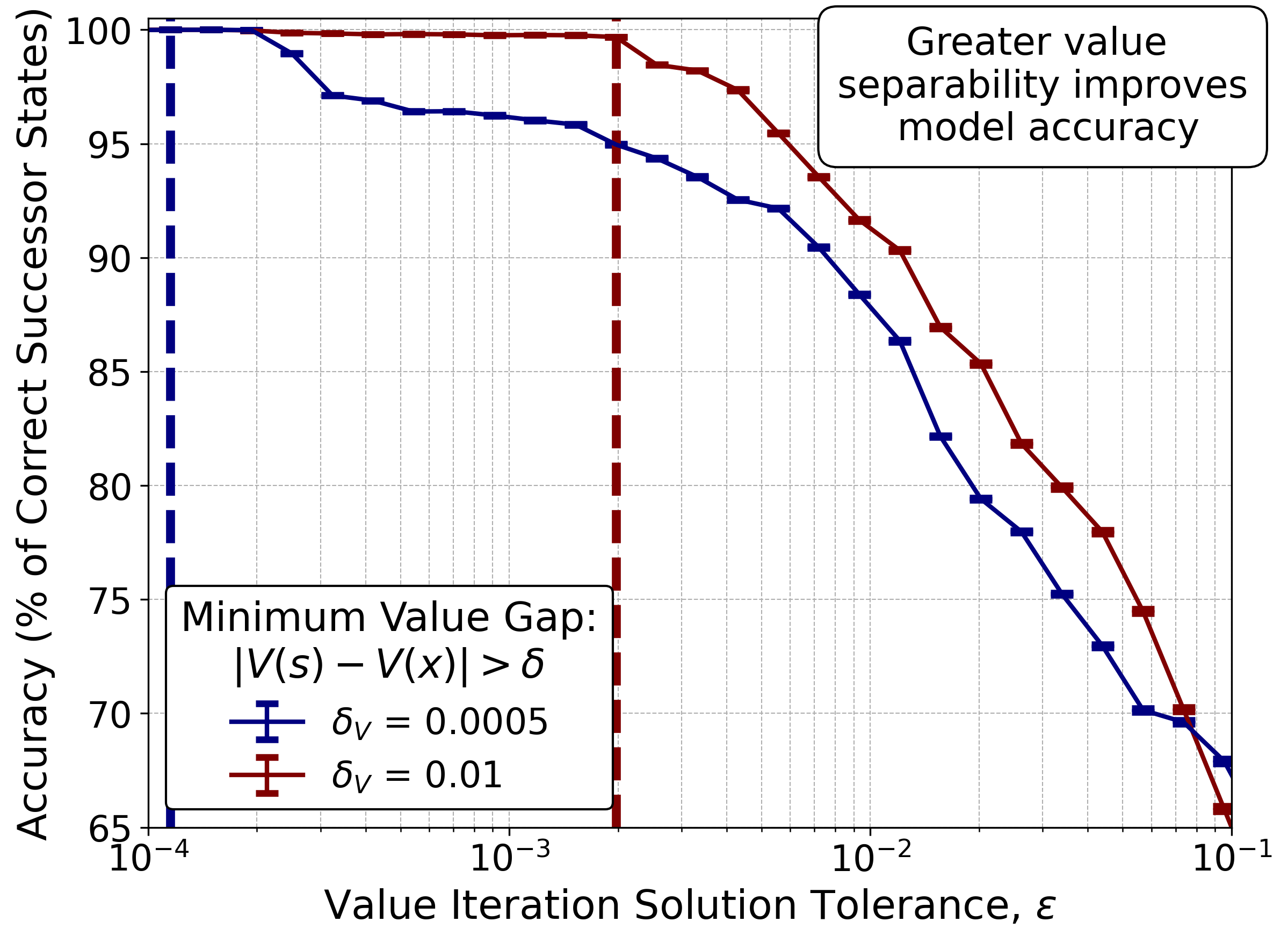}
    \caption{Value separability improves model accuracy. Bars denote the standard error in an average over 20 tasks solved to fixed tolerance (in some cases smaller than thickness of bar). The vertical dashed lines indicate the critical value given by the bound in Theorem~\ref{thm:tab-sp}.}
    \label{fig:gap-vs-acc}
\end{figure}

\section{Discussion}
The idea of $\delta$-separability seems closely related to the ``gap'' familiar from the bandit literature. It may be of interest in future work to extend the results here using analysis on a distribution of state value gaps, i.e. $\mathbb{P}\left(0 \leq \delta(s)\leq t\right) \leq c_g t^\zeta$ as in \cite{farahmand2011action}. This would allow for a probabilistic extension of Theorem~\ref{thm:tab-sp} in the continuous case, where one may employ a Gaussian prior over states as an inductive bias for locality.

In general, a single non-invertible value function poses issues for identifiability. Explicitly requiring trained (or distilled) value functions to be invertible~\cite{ardizzone2018analyzing} may be a useful path forward. However, solutions may still be recovered in the non-invertible regime. For example, one may solve for the \textit{level set}, ${\overline{S}=\{s\in \s : V^\pi(s)=\mathcal{V}\}}$, which now represents the set of all possible successor states consistent with the expected state-value function. The degeneracy $|\overline{S}|>1$ can potentially be bypassed with additional structural assumptions. For example, with sufficiently ``smooth'' dynamics, the distance between any two consecutive states can be bounded. Thus, the cardinality of the level set can be reduced by only considering states $s' \in \overline{S}$ within some neighborhood of the current state. 

An independent technique for ``pruning'' the level set is to use \textit{multiple} cached value functions: for example the values of different policies or the optimal value functions for different (e.g. reward-varying, discount-varying) tasks. These value functions provide unique level sets which must each contain the true successor state. Thus, $s'$ can be found by continually pruning the level set: $s' \in \bigcap_{i=1}^{k} \overline{S}_i$.

With a sufficient number of diverse value functions (i.e. diverse enough to generate non-identical level sets), the successor state can be systematically recovered. 

We find the proposed approach to dynamics modeling both simple and effective. We believe this new perspective has the potential to generate new algorithms at the intersection of model-based and model-free RL while simultaneously opening avenues for deeper theoretical investigation. 

\section{Acknowledgements}
JA would like to acknowledge helpful discussions with Rahul V. Kulkarni and Josiah C. Kratz. JA acknowledges funding support from the NSF through Award No. DMS-1854350 and PHY-2425180. This work is supported by the National Science Foundation under Cooperative Agreement PHY-2019786 (The NSF AI Institute for Artificial Intelligence and Fundamental Interactions, http://iaifi.org/).
\bibliography{aaai25}
\clearpage
\onecolumn
\appendix
\section{Appendix: Proofs}
\subsection{Proof of Theorem~\ref{thm:sp-error}}
We will first restate the theorem for convenience:
\begin{theorem*}
Given an $\varepsilon$-accurate value function $\widehat{Q}^\pi(s,a)$ with reverse Lipschitz constant $L$, the error in estimating the next-state $s'$ from any $(s,a)$ is upper bounded:
\begin{equation*}
    |s' - \widehat{s}'| < \frac{1+\gamma}{\gamma L} \varepsilon.
\end{equation*}
\end{theorem*}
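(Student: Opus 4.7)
The plan is to propagate the $\varepsilon$-error through the rearranged Bellman equation~\eqref{eq:inverted-dynamics} and then convert a bound in value space into one in state space via the reverse Lipschitz property of $V^\pi$.

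I would start by introducing two auxiliary quantities: the ``target'' value $\mathcal{V} := (Q^\pi(s,a) - r(s,a))/\gamma$ and its estimate $\widehat{\mathcal{V}} := (\widehat{Q}^\pi(s,a) - r(s,a))/\gamma$. Since $r$ and $\gamma$ are known exactly by Assumption~\ref{assump:rwd}, the $\varepsilon$-accuracy of $\widehat{Q}^\pi$ transfers through division by $\gamma$ to give $|\mathcal{V} - \widehat{\mathcal{V}}| \leq \varepsilon/\gamma$. A separate triangle inequality applied to the definition $V^\pi(s)=\mathbb{E}_{a \sim \pi(\cdot|s)}Q^\pi(s,a)$ shows that $|V^\pi(s) - \widehat{V}^\pi(s)| \leq \varepsilon$ pointwise in $s$.

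Next, I would note that by construction $V^\pi(s') = \mathcal{V}$, while the recovered state $\widehat{s}'$ satisfies $\widehat{V}^\pi(\widehat{s}') = \widehat{\mathcal{V}}$ (from inverting the estimated value function as in Proposition~\ref{prop:state-inverse}). The two bounds then combine via the triangle inequality to yield
\begin{equation*}
    |V^\pi(s') - V^\pi(\widehat{s}')| \leq |\mathcal{V} - \widehat{\mathcal{V}}| + |\widehat{V}^\pi(\widehat{s}') - V^\pi(\widehat{s}')| \leq \frac{\varepsilon}{\gamma} + \varepsilon = \frac{(1+\gamma)\varepsilon}{\gamma}.
\end{equation*}
To finish, I would invoke the reverse Lipschitz hypothesis: whenever $s' \neq \widehat{s}'$, $L|s' - \widehat{s}'| < |V^\pi(s') - V^\pi(\widehat{s}')|$, and chaining with the display above gives the stated bound; the case $s' = \widehat{s}'$ is trivial.

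The main subtlety I expect to navigate is the gap between the hypothesis (phrased in terms of the reverse Lipschitz constant of $\widehat{Q}^\pi$) and the property the argument actually uses (reverse Lipschitzness of $V^\pi$). I would resolve this either by reasoning pointwise in the action slot using Lemma~\ref{lem:q-rev-lipschitz} as scaffolding, or by simply invoking the natural companion hypothesis that $V^\pi$ itself is reverse Lipschitz with constant $L$. Beyond that, the argument reduces to triangle-inequality bookkeeping, with the factor $(1+\gamma)/\gamma$ decomposing cleanly into one $\varepsilon/\gamma$ contribution from the scanned target $\widehat{\mathcal{V}}$ and one $\varepsilon$ contribution from the pointwise inversion error on $\widehat{V}^\pi$.
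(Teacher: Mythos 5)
Your proof is correct, and it starts exactly where the paper's does: the pointwise state-value error bound $|\widehat{V}^\pi(s) - V^\pi(s)| \leq \varepsilon$ (via the policy expectation and triangle inequality) and the scanned-value bound $|\widehat{\mathcal{V}} - \mathcal{V}| \leq \varepsilon/\gamma$ are verbatim the paper's opening two steps, and the $(1+\gamma)/\gamma$ factor decomposes into the same two contributions. Where you genuinely diverge is the conversion from value space to state space. The paper converts each error \emph{separately} into a state-space error via a slope argument ($\varepsilon/\delta s_1 = V'(s) > L$, justified by appeal to its figure), and then asserts that the worst case is when $\delta s_1 < \varepsilon/L$ and $\delta s_2 < \varepsilon/(\gamma L)$ combine additively. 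You instead stay in value space: using $V^\pi(s') = \mathcal{V}$ and $\widehat{V}^\pi(\widehat{s}') = \widehat{\mathcal{V}}$, the triangle inequality gives $|V^\pi(s') - V^\pi(\widehat{s}')| \leq (1+\gamma)\varepsilon/\gamma$ exactly, and the reverse Lipschitz property is invoked only once at the end. This buys real rigor: the paper's derivative argument presumes $V^\pi$ is differentiable, which the reverse Lipschitz definition does not guarantee, and its ``errors add'' step is a geometric heuristic, whereas your chain is exact, preserves the strict inequality correctly, and handles the degenerate case $s' = \widehat{s}'$ explicitly. Finally, both arguments share the same hypothesis mismatch --- the theorem attributes the constant $L$ to $\widehat{Q}^\pi$ while the proof actually needs the \emph{true} state-value function $V^\pi$ to be reverse Lipschitz --- but you are the only one to flag it; your proposed repair (take reverse Lipschitzness of $V^\pi$ as the operative hypothesis, or obtain it structurally via Lemma~\ref{lem:q-rev-lipschitz}) is the right one.
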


\begin{figure}[H]
    \centering
    \includegraphics[width=0.4\textwidth]{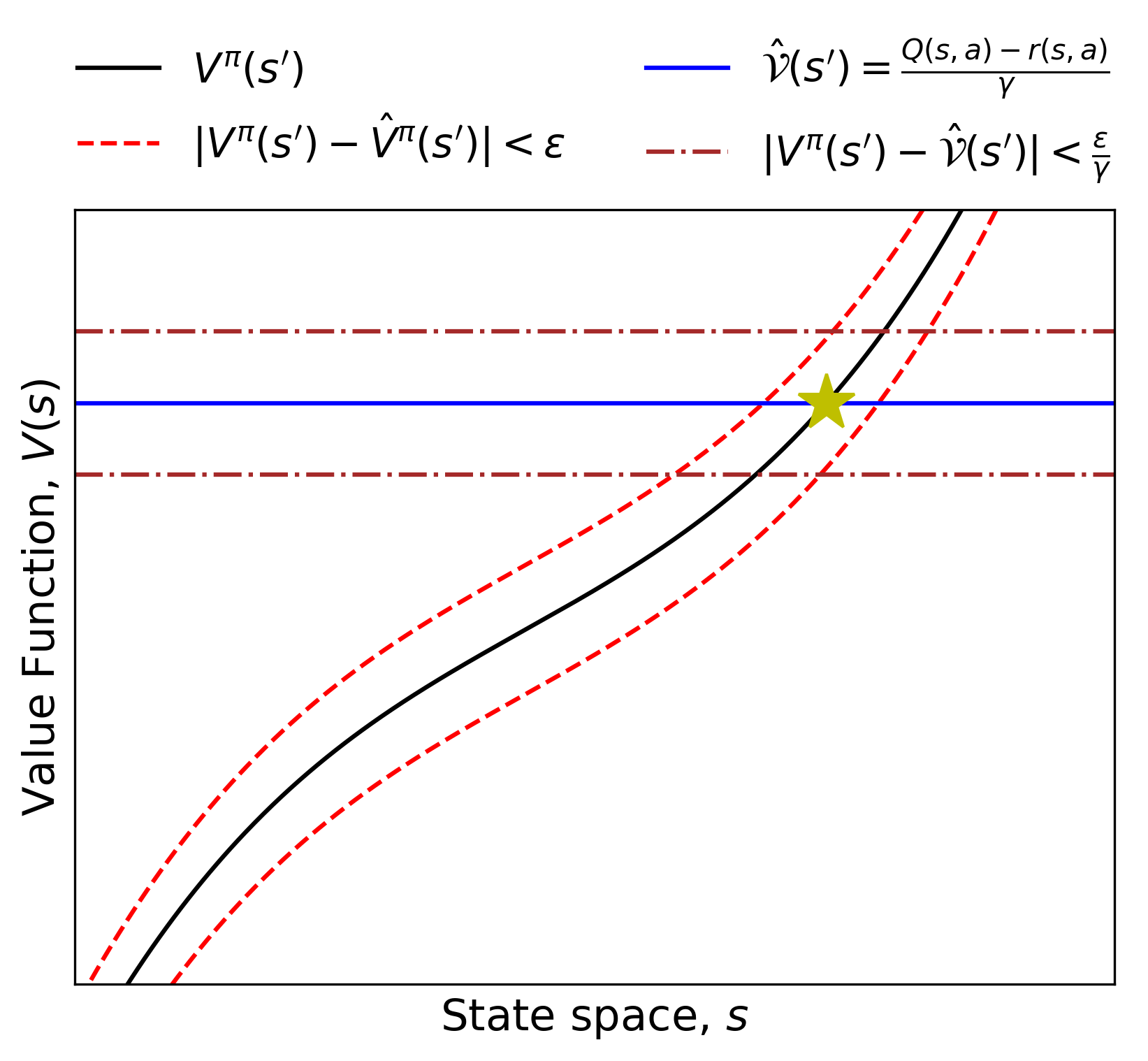}
    \caption{Illustration of the errors described in Theorem~\ref{thm:sp-error}. The dashed lines represent uncertainty bounds on $V^\pi$ and the scanned value $\mathcal{V}$ from the Bellman equation, forming an uncertain region around the star (the true next-state and corresponding value).  Note: although we plot the value as a function of state $s$, in practice this should be interpreted as the space of potential successor states, $s'$. }
    \label{fig:my_label}
\end{figure}

\begin{proof}
First note that if the action-value error is bounded, the state-value error is equally bounded, as shown below:
\begin{align*}
    |\widehat{V}^\pi(s) - V^\pi(s)| &= |\mathbb{E}_{a \sim{} \pi(\cdot|s)} \widehat{Q}^\pi(s,a) - \mathbb{E}_{a \sim{} \pi(\cdot|s)} {Q}^\pi(s,a)| \\
    &\leq \mathbb{E}_{a \sim{} \pi(\cdot|s)} |\widehat{Q}^\pi(s,a) - Q^\pi(s,a)| \\
    &\leq \mathbb{E}_{a \sim{} \pi(\cdot|s)} \varepsilon = \varepsilon,
\end{align*}
where we use the triangle inequality in the penultimate line. Additionally, note that the ``scanned value'' $\mathcal{V}=\gamma^{-1}(Q-r)$ also suffers a bounded error. Assuming the reward function and discount factor are known exactly:
\begin{align*}
    \big|\widehat{\mathcal{V}}-\mathcal{V}\big| &= \biggr| \frac{\widehat{Q}^\pi(s,a) - r(s,a)}{\gamma} - \frac{Q^\pi(s,a) - r(s,a)}{\gamma} \biggr|\\
    &= \gamma^{-1} | \widehat{Q}^\pi(s,a) - Q^\pi(s,a)|\\
    &\leq \gamma^{-1} \varepsilon.
\end{align*}
Notice that in the previous bound one can readily extend the analysis for $\varepsilon$-accurate reward functions and discount factors.

With these terms bounded, we can now suppose that the value function changes at its smallest rate: $L$, which incurs the worst error. The function $V^\pi$ and the inverse value $\mathcal{V}$ both have bounded error, which in turn implies that the error in estimating the inverse (the state) is also bounded.
From the first ``function evaluation'' error, let $\delta s_1$ denote the error in $s$ caused by incorrectly using the value of $\hat{V}(s)$, depicted by the intersection of the blue line and red dashed lines in Fig.~\ref{fig:my_label}. Then, the resulting error can be bounded as
\begin{align*}
    \frac{\varepsilon}{\delta s_1} = V'(s) &> L, \\
    \delta s_1 < L^{-1} \varepsilon,
\end{align*}
and similarly for the second ``inverse value'' error:
\begin{align*}
    \frac{\varepsilon/\gamma}{\delta s_2} = V'(s) &> L, \\
    \delta s_1 < L^{-1} \varepsilon / \gamma,
\end{align*}
where $V'$ denotes the derivative of the state-value function with respect to $s$.
To complete the proof, we note that the worst-case error occurs when these terms combine additively, resulting in the stated error bound: 
\begin{align*}
    |s' - \widehat{s}'| &< \delta s_1 + \delta s_2 \\
    &< L^{-1} \varepsilon + L^{-1} \varepsilon / \gamma \\
    &= \frac{1+\gamma}{\gamma L} \varepsilon.
\end{align*}
\end{proof}
\subsection{Proof of Theorem~\ref{thm:tab-sp}}
\begin{theorem*}[Successor-State Identifiability]
Suppose the true value function ($V^\pi$) is $\delta$-separable and an $\varepsilon$-accurate estimate of the value function ($\widehat{V}^\pi$) is given. If $\varepsilon< 
\delta (2\gamma^{-1}+2)^{-1}$, then the dynamics model is identifiable.
\end{theorem*}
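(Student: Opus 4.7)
The plan is to show that under the stated inequality, the closest-match procedure used to read off successor states from the $\widehat{V}^\pi$ table must return the correct state. Fix any $(s,a)$ and let $s^\star = f(s,a)$. The rearranged Bellman equation~\eqref{eq:inverted-dynamics} gives $V^\pi(s^\star) = \mathcal{V}$ exactly, where $\mathcal{V} := (Q^\pi(s,a) - r(s,a))/\gamma$, while in practice one scans the estimated target $\widehat{\mathcal{V}} := (\widehat{Q}^\pi(s,a) - r(s,a))/\gamma$ against the estimated table. Identifiability therefore reduces to showing that $s^\star$ is the unique minimizer of $x \mapsto |\widehat{V}^\pi(x) - \widehat{\mathcal{V}}|$ over $\s$.

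My first step would be to reuse the two pointwise error bounds already established in the proof of Theorem~\ref{thm:sp-error}: an $\varepsilon$-accurate $\widehat{Q}^\pi$ yields $|\widehat{V}^\pi(x) - V^\pi(x)| \leq \varepsilon$ for every $x$ (via the triangle inequality applied under the $\pi$-expectation), and, under Assumption~\ref{assump:rwd}, $|\widehat{\mathcal{V}} - \mathcal{V}| \leq \varepsilon/\gamma$. These are the only places the accuracy hypothesis enters.

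Next I would apply the triangle inequality at $x=s^\star$ to upper-bound the ``correct'' distance,
\begin{equation*}
|\widehat{V}^\pi(s^\star) - \widehat{\mathcal{V}}| \leq |\widehat{V}^\pi(s^\star) - V^\pi(s^\star)| + |\mathcal{V} - \widehat{\mathcal{V}}| \leq \varepsilon + \varepsilon/\gamma,
\end{equation*}
using $V^\pi(s^\star) = \mathcal{V}$, and invoke the reverse triangle inequality together with $\delta$-separability to lower-bound the distance at any competing $x \neq s^\star$,
\begin{equation*}
|\widehat{V}^\pi(x) - \widehat{\mathcal{V}}| \geq |V^\pi(x) - V^\pi(s^\star)| - |\widehat{V}^\pi(x) - V^\pi(x)| - |\mathcal{V} - \widehat{\mathcal{V}}| > \delta - \varepsilon - \varepsilon/\gamma.
\end{equation*}
Requiring the upper bound on the correct distance to be strictly smaller than the lower bound on every wrong distance yields $\varepsilon + \varepsilon/\gamma < \delta - \varepsilon - \varepsilon/\gamma$, which rearranges exactly to $\varepsilon < \delta(2\gamma^{-1}+2)^{-1}$.

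The hard part, to the extent there is one, is bookkeeping rather than ideas: one must remember that the two independent error sources, $\varepsilon$ from the value table and $\varepsilon/\gamma$ from the scan target, each appear on \emph{both} sides of the identifiability comparison. This is precisely what produces the overall factor of two in $2(1+\gamma^{-1})$ rather than the $(1+\gamma^{-1})$ coefficient familiar from the continuous-space Theorem~\ref{thm:sp-error}; care is needed not to silently absorb one copy and obtain a spuriously sharper condition.
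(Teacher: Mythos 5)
Your proof is correct and follows essentially the same route as the paper's: both rely on the same two error bounds ($|\widehat{V}^\pi(x) - V^\pi(x)| \leq \varepsilon$ for the table and $|\widehat{\mathcal{V}} - \mathcal{V}| \leq \gamma^{-1}\varepsilon$ for the scan target), combine them additively, and arrive at the identical non-overlap condition $\delta > 2\varepsilon + 2\gamma^{-1}\varepsilon$. The only difference is bookkeeping: the paper first shrinks the table gap to $\delta - 2\varepsilon$ and then requires the scan interval of width $2\gamma^{-1}\varepsilon$ to fit inside it, whereas you place one copy of each error source on each side of a direct nearest-match comparison at $s^\star$ versus competing $x$ --- the same terms, paired differently.
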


\begin{proof}
In the tabular setting, it is clear that $\delta$-separability for any $\delta > 0$ allows for states to be distinguished, since each state has a unique corresponding value. However, incorporating an error of $\varepsilon$ in the value function has a non-obvious effect on the ability to identify states. Specifically, if $\varepsilon$ is too large, the margin of error for two states may overlap, i.e., $V$ may no longer be separable for any $\delta > 0$. Thus, we must first show that the $\varepsilon$-accurate model $\widehat{V}^\pi$ is also $\delta$-separable.

The separability condition is determined by the smallest gap between any two state values. Without loss of generality, assume $V^\pi(s) > V^\pi(x)$ represent the two closest state values. Then, by definition of separability, the states $s$ and $x$ must satisfy the defining bound:
\begin{equation*}
    V^\pi(s) - V^\pi(x) > \delta.
\end{equation*}

In the worst-case scenario, the estimate of the highest value $\widehat{V}^\pi(s)$ is an overestimate of $V(s)$ by $\varepsilon$, and the estimate of the next largest value $\widehat{V}^\pi(x)$ ``adversarially'' underestimates the true value $V^\pi(x)$ by $\varepsilon$. This reduces the gap between the values as follows:
\begin{align*}
    \widehat{V}^\pi(s) - \widehat{V}^\pi(x) &> {V}(s) - \varepsilon - \left({V}^\pi(x) + \varepsilon\right)\\
    &= \delta - 2\varepsilon,
\end{align*}
where the first line follows from the definition of $\varepsilon$-accuracy, and the second line follows from the $\delta$-separability of the exact value function, $V^\pi(s)$. Since this setting represents the closest any two states can be in value under $\widehat{V}^\pi(s)$, we have for all states $s,x \in \s$:
\begin{equation*}
|\widehat{V}^\pi(s) - \widehat{V}^\pi(x)| > \delta - 2\varepsilon.
\end{equation*}

As expected, using an imprecise value function reduces the minimum possible gap.

Now that the estimated value function is $\delta$-separable, the proposed algorithm would involve ``scanning'' for the value $\mathcal{V}(s')$ among the table's entries. However, the scanned value is also subject to error. In the proof of Theorem~\ref{thm:sp-error} we showed that $\big|\widehat{\mathcal{V}}-\mathcal{V}\big|<\gamma^{-1}\varepsilon$, which equally holds in the present context.

For the next-state to be properly identified, the margin of error implied by $\widehat{\mathcal{V}}$ must not overlap with the remaining separability gap, of size $\delta - 2\varepsilon$. To be precise: given $\widehat{\mathcal{V}}$ the entire range of plausible underlying $\mathcal{V}$ values lie in a range of size $2\gamma^{-1} \varepsilon$. Thus we require 
\begin{equation*}
    \delta - 2 \varepsilon > 2 \gamma^{-1} \varepsilon,
\end{equation*}
for there to remain a gap between state values. This imposes the constraint on $\varepsilon$:
\begin{equation*}
    \varepsilon < \frac{\delta}{2\gamma^{-1}+2},
\end{equation*}
which is the bound shown in the main text.
\end{proof}

\section{Appendix: Experiments}
We describe the details of the experiments shown in Fig.~\ref{fig:gap-vs-acc} below.  First, the deterministic dynamics are fixed for an ``empty'' $5 \times 5$ gridworld (meaning: no obstacles, 4 actions in cardinal directions). We then generate reward functions uniformly at random within the range $r(s) \in (-1,1)^{|S|}$. Each reward function defines a task, which is solved to determine the value of $\delta$, corresponding to the minimum gap in state values after optimization. Once a reward function has been determined to have a value of $\delta$ within 1\% of the desired value (as presented in the legend), we solve this task over a range of precision values ($\varepsilon$, on $x$-axis in Fig.~\ref{fig:gap-vs-acc}). The value of $\varepsilon$ is controlled by using an appropriate number of iterations of the Bellman operator. The discount factor is fixed to $\gamma=0.99$.

\end{document}